\newcommand{\printfnsymbol}[1]{%
  \textsuperscript{\@fnsymbol{#1}}%
}
\newtheorem{prop}{Proposition}
\begin{document}
\pagestyle{headings}
\mainmatter
\def\ECCV16SubNumber{***}  

\title{Certifying Global Robustness\\ for Deep Neural Networks} 

\titlerunning{Certifying Global Robustness for Deep Neural Networks}

\authorrunning{Y. Li, G. Zhao, S. Kong, Y. He and H. Zhou}

\author{You Li\thanks{Contributed equally.}\inst{1} \and
Guannan Zhao\printfnsymbol{1} \inst{1} \and
Shuyu Kong\inst{2} \and
Yunqi He\inst{1} \and
Hai Zhou\inst{1}}

%
\institute{Northwestern University, USA\\
\email{\{you.li, gnzhao, yunqi.he\}@u.northwestern.edu}\\
\email{haizhou@northwestern.edu}
\and
Meta\\
\email{kshuyu@meta.com}
}

\maketitle

\begin{abstract}
A globally robust deep neural network resists perturbations on all meaningful inputs.
Current robustness certification methods emphasize local robustness, struggling to scale and generalize.
This paper presents a systematic and efficient method to evaluate and verify global robustness for deep neural networks, leveraging the PAC verification framework for solid guarantees on verification results. 
We utilize probabilistic programs to characterize meaningful input regions, setting a realistic standard for global robustness.
Additionally, we introduce the cumulative robustness curve as a criterion in evaluating global robustness. We design a statistical method that combines multi-level splitting and regression analysis for the estimation,
significantly reducing the execution time. 
Experimental results demonstrate the efficiency and effectiveness of our verification method and its capability to find rare and diversified counterexamples for adversarial training.

\keywords{Adversarial Example, Robustness Verification, Multi-level Splitting, AI Security.}
\end{abstract}

\section{Introduction}

Deep Neural Networks (DNNs) have achieved remarkable success in numerous machine learning tasks, yet they are vulnerable to adversarial attacks. Given a correctly classified input example $x$, an adversary can craft a small perturbation $\Delta$, such that for $x+\Delta$, the targeted DNN will produce a prediction that violates a property of interest. Meanwhile, $x+\Delta$ is almost indistinguishable from $x$ to the human eye~\cite{goodfellow2014explaining}. Adversarial attacks have posed significant threats to decision-critical systems like autonomous driving~\cite{sun2020towards}, critical infrastructures~\cite{sayghe2020evasion}, face recognition~\cite{dong2019efficient}, etc.

Various heuristic methods were proposed to defend against adversarial attacks. Nevertheless, subsequent attacks can always defeat them~\cite{athalye2018obfuscated,carlini2017adversarial}. In response, researchers propose \textit{certifiable robustness} to provide rigorous mathematical guarantees against those attacks with $\Delta$ bounded by some $l_p$-norm. 
Intuitively, it ensures that a classifier's predictions are consistent within the neighbourhood of a specific input example.
These methods either use layer-by-layer reachability analysis on a network or employ optimization and constraint solving techniques to check the existence of adversarial examples. Unfortunately, existing robustness certification methods still face challenges in scaling to large DNNs when checking a single input example and its neighborhood~\cite{liu2019algorithms}, let alone generalizing to  entire input regions.

Goldwasser \textit{et al.}~\cite{goldwasser2021interactive} recently proposed and laid the theoretical foundation for \textit{PAC Verification}.
We adapt PAC verification to the task of certifying the global robustness of DNNs. A significant challenge here is how to characterize meaningful input regions. The lack of a global characterization has restricted the effectiveness of DNN verification in the following ways: \textit{i)} local robustness verification cannot be directly generalized to the whole decision region of a class; \textit{ii)} a verification task is limited within an $l_p$-norm distance from the input $x$, while an adversary is not restricted; \textit{iii)} an $l_p$-norm ball could cross decision boundaries or contain meaningless input regions, making a verification task unrealistic. Through an analysis of robustness metrics, we discover that a probabilistic program can address all the above issues.

Nevertheless, typical neural networks are not strictly robust in the entire input regions.
Instead, we consider the optimal robustness value for a specific DNN given some robustness metric. We also let the verifier specify a tolerable threshold for local robustness.
We summarize the verification result in \textit{cumulative robustness curves}, which depict the robustness value as a function of perturbation radius, robustness metric and local robustness threshold. The curves serve as a comprehensive measure of global robustness for a DNN. When the function is evaluated at a certain point, its output directly indicates whether the DNN satisfies prescribed requirements.

It is nontrivial to determine whether the local robustness of an input sample meets a threshold, because the probability of violating a robustness metric could be extremely small. We devise the margin function to assess whether a robustness metric is violated. Facilitated by the margin function, we employ adaptive multi-level splitting to measure local robustness. Additionally, we design a parameter estimation technique to estimate the same quantity. In our experiments, we find that a combination of the two techniques achieves the best balance between efficiency and accuracy.

The main contributions of this work are as follows: 1) we formularize and analyze the global robustness risk for a DNN; 2) we suggest using probabilistic programs to characterize the global input distribution, such that global robustness verification is feasible; 3) we propose the cumulative robustness function as a comprehensive measure of global robustness; 4) we devise adaptive multi-level splitting calibrated estimation (ACE) to efficiently and accurately estimate local robustness for samples in a specific global distribution.
\section{Global Robustness of Deep Neural Networks}

\subsection{Preliminaries}
Let $f_{\theta}(\cdot)$ represent a DNN classifier. Given the parameters $\theta$, it can be characterized as a function $f_{\theta}: \mathbb{R}^{m} \rightarrow \mathbb{R}^{n}$ where $m$ is the dimension of input samples and $n$ is the number of output classes.
A classifier typically maps an input sample $x$ to a vector of \textit{scores} $[y_1(x), y_2(x), \cdots, y_n(x)]$,
and the corresponding
output label $f_{\theta}(x)$ is the greatest element among the output vector: $f_{\theta}(x) = \texttt{argmax}_{ i \in \{ 1, \cdots,  n\} }\, y_i(x)$.
On the other hand, let $c(x)$ be the ground truth function, which gives the underlying true labels for the input samples.

The decision region of class $i$ is the set of all input samples whose $i$-th score is the greatest:
$\Omega_{f_{\theta},i}=\{x\, |\, y_i(x)-\texttt{max}_{j\neq i}\, y_j(x) > 0\}$,
while the decision boundary $\textit{DB}_{f_{\theta}}$ partitions different decision regions.
Accordingly, $\textit{DB}_{c}$ denotes the ground truth boundary. Let $\mathbb{B}(x,r)$ represent the $r$-neighbourhood of the input sample $x$ or the input distribution within the $r$-neighbourhood, and let $\mathbb{B}(F,r)$ be the union of the $r$-neighbourhoods for all input samples satisfying some formula $F$.
\\

\subsection{Analysis of Global Robustness}The most commonly used model for the adversarial attack is the additive model: a real input $x'$ consists of two terms, the \textit{nominal input} $x$, and an additional perturbation $\Delta \in \mathbb{R}^{m}$ bounded by radius $r$ with regard to some $l_p$ norm: $\left \Vert \Delta \right \Vert_p < r$. 
The local robustness property of DNN can thus be defined as follows:

\begin{definition} [Local Robustness]
\label{local_rob}
Given a nominal input $x \in \mathbb{R}^{m}$, a DNN $f_{\theta}: \mathbb{R}^{m} \rightarrow \mathbb{R}^{n}$ is locally robust at $x$ with regard to radius $r$, 
if $\forall$$x' \in \mathbb{B}(x,r)$:
$m(f_{\theta}, x, x') = \mathtt{true}$.
\end{definition}

We term $m(\cdot)$ as \textit{robustness metric}. Three metrics are commonly used to measure local robustness~\cite{diochnos2018adversarial,zhang2019theoretically}:
\begin{equation}
  m(f_{\theta}, x, x') \triangleq
    \begin{cases}
      \ m_0: f_{\theta}(x') = c(x'), & \\
      \ m_1: f_{\theta}(x') = c(x), & \\
      \ m_2: f_{\theta}(x') = f_{\theta}(x). & 
    \end{cases}       
\end{equation}

$m_0$ mainly concerns \textit{accuracy} and is only measurable when ground truth labels are available everywhere in the neighbourhood. $m_2$ concerns the \textit{consistency} of prediction, whereas $m_1$ combines the other two metrics by requiring the nominal input and all perturbed inputs within the neighborhood to produce the same correct label.

Definition~\ref{local_rob} ensures the absence of particular types of adversarial examples around a certain input sample. One may generalize this definition to the whole decision region of a class by requiring it to hold everywhere in the region. Nevertheless, for many applications, such a fully robust DNN is not realistically obtainable. 
Additionally, statistical defense techniques can provide guaranteed robustness when adversarial examples are rare within a distribution~\cite{cohen2019certified,dhillon2018stochastic,wong2018provable}.
Therefore, instead of requiring the DNN to be robust everywhere, we measure the probability that a certain robustness metric is violated. The \textit{robustness risk} with regard to an input distribution $\mathcal{D}$ is defined as 
\begin{equation}
\mathbf{R}_{rob}(f_{\theta},m) \triangleq \mathbf{E}_{x \sim \mathcal{D}}\mathbbm{1}[\exists x' \in \mathbb{B}(x,r): m(f_{\theta}, x, x') = \mathtt{false}].
\label{risk}
\end{equation}

The robustness risk can be further decomposed.
Define classification risk $\mathbf{R}_{c}(f_{\theta}) \triangleq \mathbf{E}_{x \sim \mathcal{D}}\mathbbm{1}[f_{\theta}(x)\neq c(x)]$, which is the standard measure of the error rate of a classifier. Let boundary risk~\cite{zhang2019theoretically} represent the probability that a sample is classified correctly but resides within the neighbourhood of the decision boundary: $\mathbf{R}_{b}(f_{\theta}) \triangleq \mathbf{E}_{x \sim \mathcal{D}}\mathbbm{1}[f_{\theta}(x)=c(x) \wedge x \in \mathbb{B}(\textit{DB}_{f_{\theta}},r)]$. To deal with global robustness, we introduce ground truth boundary risk defined as $\mathbf{R}_{gb}(f_{\theta}) \triangleq \mathbf{E}_{x \sim \mathcal{D}}\mathbbm{1}[f_{\theta}(x)=c(x) \wedge x \in (\mathbb{B}(\textit{DB}_{c},r) \setminus \mathbb{B}(\textit{DB}_{f_{\theta}},r))]$. From these definitions, we can derive the following relations:
\begin{equation}
    \begin{cases}
      \ \mathbf{R}_{rob}(f_{\theta},m_0) \approx \mathbf{R}_{c}(f_{\theta}) + \mathbf{R}_{b}(f_{\theta}) + \mathbf{R}_{gb}(f_{\theta}), & \\
      \ \mathbf{R}_{rob}(f_{\theta},m_1) = \mathbf{R}_{c}(f_{\theta}) + \mathbf{R}_{b}(f_{\theta}), & \\
      \ \mathbf{R}_{rob}(f_{\theta},m_2) \leq \mathbf{R}_{c}(f_{\theta}) + \mathbf{R}_{b}(f_{\theta}). & 
    \end{cases}  
\label{approx}
\end{equation}

$\mathbf{R}_{rob}(f_{\theta},m_0)$ equals the right hand side if $\mathbb{B}(\textit{DB}_{f_{\theta}},r)$ and $\mathbb{B}(\textit{DB}_{c},r)$ are non-overlapping.
Besides, they are almost equal when the distributions of the two neighbourhoods are independent, as the probability of overlapping is negligible.
$\mathbf{R}_{rob}(f_{\theta},m_2)$ gets infinitely close to the right-hand side as the value of $r$ increases because then the $r$-neighbourhood of the decision boundary covers more mis-classified samples. Both $\mathbf{R}_{b}(f_{\theta})$ and $\mathbf{R}_{gb}(f_{\theta})$ converge to $0$ as $r$ approaches $0$, in which case the robustness risk is equal to the classification risk.

\subsection{Characterizing the Global Input Distribution}

In addition to robustness metrics, the input distribution $\mathcal{D}$ is another crucial factor in evaluating the global robustness risk. A naive solution is to set $\mathcal{D}$ as the uniform distribution in the whole input space. Such a distribution does not correspond to meaningful regions, and the counterexamples returned from verification may not be meaningful instances. Alternatively, $\mathcal{D}$ can be set to the whole decision regions of classes of $f_{\theta}$. However, these regions could intersect with the ground truth boundaries, $\textit{DB}_{c}$. Moreover, $\textit{DB}_{f_{\theta}}$ itself could have a significant impact on the robustness risk and should not be neglected.

An ideal mechanism to characterize the global input distribution $\mathcal{D}$ should guarantee that \textit{i)} $\mathcal{D}$ contains and concentrates on the meaningful regions as humans do; \textit{ii)} $\mathcal{D}$ has a minimal intersection with the ground truth boundaries; \textit{iii)} we can draw a large number of samples efficiently from $\mathcal{D}$. We believe probabilistic programs can address all these concerns.

A probabilistic program~\cite{lake2015human,feinman2020learning} is a hierarchical model composed of random variables. Each random variable is a distribution or a conditional distribution depending upon other random variables. The program itself is a generative model which takes a class label $\psi$ as a parameter and produces a sample $I$. It executes by drawing random values from the distributions following the hierarchy of the program as well as the conditional relations between random variables:
$$I := G(\phi, R, \psi)$$
where $\phi$ denotes the parameters of distributions of all random variables, while $R$ captures the overall hierarchy and the conditional relations. A new sample of the corresponding class is produced in each forwarding pass of the program. 

A probabilistic program learns by fitting the conditional distributions to the training examples. Formally speaking, given a set of examples $M$, the learning task is to optimize $\phi$ to maximize the joint distribution of all sample-label pairs: $\phi^\star = \mathtt{argmax}_{\phi}\prod_{m \in M}P(\phi \, | \, I_{m},\psi_{m})$.
With this optimized $\phi^\star$, new samples drawn from the probabilistic program follows a meaningful input distribution.

Cognitive scientists have shown that probabilistic programs capture causal and compositional relations in a similar way as humans do~\cite{goodman2014concepts}. As each program represents a concept, it is very efficient to generate new random examples within a designated class. More importantly, probabilistic programs hold strong inductive bias. Thus, domain experts can embed their prior domain knowledge into the overall hierarchy, relations of random variables, and the selection of distributions, such that their developed programs can represent concepts and therefore accurately characterize meaningful input regions instead of the whole decision regions. We will further elaborate on the necessity of probabilistic programs in verifying global robustness in Section~\ref{statistical}.
\section{Statistical Global Robustness Verification}
\label{statistical}

In this section, we present PAC robustness verification, a statistical proof system for global robustness. A DNN is certified if its robustness risk is below a user-specified threshold or it is close to the optimal value. Then we improve verification efficiency by introducing an algorithm called ACE. In addition, our proposed method can efficiently capture diversified counterexamples even if the chance of violation is extremely rare.

\subsection{PAC Robustness Verification}
\label{pac}

PAC verification~\cite{goldwasser2021interactive} was recently proposed by Goldwasser \textit{etc.}, to verify machine learning models.
It relaxes the correctness condition by allowing a minimal error. A hypothesis is accepted if the error is smaller than a sufficiently small $\epsilon$ with high confidence and rejected otherwise. This approach significantly reduces the computational burden of verification while it still provides rigorous mathematical guarantees on the correctness of the model.

We apply the PAC verification framework for robustness certification. Consider an algorithm that \textit{estimates} the global robustness risk. For any predetermined $\epsilon, \delta \in (0,1]$, a PAC algorithm is capable of producing an output $\widehat{\mathbf{R}}_{rob}(f_{\theta},m)$ such that
\begin{equation}
\mathbb{P}(
    \mathbf{R}_{rob}(f_{\theta},m) - \epsilon
<   \widehat{\mathbf{R}}_{rob}(f_{\theta},m)
<   \mathbf{R}_{rob}(f_{\theta},m) + \epsilon
) \geq 1-\delta.
\label{pac_spec}
\end{equation}

The algorithm $\widehat{\mathbf{R}}_{rob}$ in Equation~\ref{pac_spec} can be fulfilled by a standard Monte-Carlo-based PAC verification algorithm. The algorithm draws random samples from $\mathcal{D}$ and queries the DNN as a black box through forwarding passes. From Hoeffding's inequality~\cite{hoeffding1994probability}, $\mathcal{O}(\frac{1}{\epsilon^{2}})$ samples are sufficient to ensure that the error is smaller than $\epsilon$ with high probability.
We will present a more efficient verification algorithm in Section~\ref{sec:algorithm}.

The ideal hypothesis $\mathbf{R}_{rob}(f_{\theta},m) = 0$ is impractical for typical DNNs. Instead, one should propose a hypothesis that is achievable according to the selected robustness metric $m$. Following the discussion about Equation~\ref{approx}, the optimal values for the robustness risk are
\begin{equation}
  \mathbf{R}^\star_{rob}(f_{\theta},m) =
    \begin{cases}
      \ m_0: \mathbf{R}_{c}(f_{\theta}), & \\
      \ m_1: \mathbf{R}_{c}(f_{\theta}), & \\
      \ m_2: 0. & 
    \end{cases}    
\label{optimal_values_cases}
\end{equation}

The above values are reached when $\mathbf{R}_{b}(f_{\theta})=0$, \textit{i.e.}, $\textit{DB}_{f_{\theta}}$ has no intersection with $\mathcal{D}$, and $\mathbf{R}_{gb}(f_{\theta})=0$, \textit{i.e.}, $\textit{DB}_c$ is excluded from $\mathcal{D}$. These conditions are made possible in verification because the distribution $\mathcal{D}$ specified by an ideal probabilistic program is sufficiently distant from ground truth boundaries.

\subsection{Global Robustness Certification and Measurement}
\label{sec:measure}

\subsubsection{Global Robustness Criterion.} Even with the above-mentioned relaxations, a typical DNN is not globally robust with respect to Expression~\ref{risk}. We allow extra flexibility from two aspects. On the one hand, for each input sample, we tolerate violations to the \textit{local} robustness metric if the occurrence is less than a threshold $t$. It is reasonable because regularization techniques like noise injection~\cite{he2019parametric} and randomized smoothing~\cite{cohen2019certified} 
can mitigate both white-box and black-box attacks when $t$ is sufficiently small.
On the other hand, 
we let the verifier prescribe an additional error $\rho$. The relaxed global robustness criterion thus becomes
\begin{equation}
\mathbf{R}_{rob}(f_{\theta},m,t) \leq \mathbf{R}^\star_{rob}(f_{\theta},m) + \rho,
\label{criterion}
\end{equation}
where
\begin{equation}
\mathbf{R}_{rob}(f_{\theta},m,t) \triangleq \mathbf{E}_{x \sim \mathcal{D}}\mathbbm{1}[\mathbf{E}_{x' \sim \mathbb{B}(x,r)} \mathbbm{1}[m(f_{\theta}, x, x')=\mathtt{false}] > t].
\label{acceptable}
\end{equation}

We term $t$ as \textit{local robustness threshold}, $\mathbf{E}_{x' \sim \mathbb{B}(x,r)} \mathbbm{1}(m(f_{\theta}, x, x')=\mathtt{false})$ as \textit{local robustness risk}, and $\rho$ as \textit{acceptable error}. Notice that when both $t$ and $\rho$ are set to $0$, the global robustness risk reduces to the regular definition as described in Expression~\ref{risk}, and is required to reach the optimal values in Expression~\ref{optimal_values_cases}.

\subsubsection{Cumulative Robustness Function.} Based on the global robustness criterion, we propose the cumulative robustness function as a comprehensive measure of the global robustness of a DNN. The function is given by 
\begin{equation}
R(t) = 1 - \mathbf{R}_{rob}(f_{\theta},m,t)
\label{cumulative}
\end{equation}
and is monotonically increasing with $t$. When evaluated at a certain $t$ value, the function returns the probability that a random sample in $\mathcal{D}$ is robust for local robustness thresholds up to $t$. A curve can be efficiently plotted as $t$ continuously changes with fixed $f_\theta$ and $m$, referred to as the \textit{cumulative robustness curve}.

\subsubsection{Parameter Estimation of Local Robustness Risk.}

As shown in Expression~\ref{acceptable}, an estimation of $\mathbf{R}_{rob}(f_{\theta},m,t)$ needs to be conducted in two levels. In a naive Monte Carlo approach, the estimator draws $N$ i.i.d. samples $g_1, \cdots, g_{N}$ from the global distribution $\mathcal{D}$, and $M$ i.i.d. samples from the $r$-neighbourhood of every $g_i$ to determine whether the local robustness risk is below the threshold. However, violating the robustness metric locally could be an event with a non-zero but extremely small probability. It is likely that no realisation of the event can be encountered with a reasonable choice of $M$.

We aim to devise a simple parameter estimation method to predict the local robustness risk. Define the \textit{margin} of scores
\begin{equation}
  h(x, x', m) =
    \texttt{max}_{j\neq i}\ y_j(x') -  y_{i}(x'),
\end{equation}
where
\begin{equation}
  \begin{cases}
    \ m_0: i = c(x'), & \\
    \ m_1: i = c(x), & \\
    \ m_2: i = f_{\theta}(x). & 
  \end{cases}
\end{equation}

In a nutshell, the margin function evaluates the difference between the score of the reference class and the highest score among other classes. $x'$ violates the robustness metric if the corresponding margin is greater than $0$.

With a little abuse of notation, we use $\mathbb{P}(\mathcal{N}(0,1) > k)$ to denote the portion of the standard normal random variable that is greater than $k$. Within the $r$-neighbourhood of a sample $x$, if the margins $h(x, x', m)$ are normally distributed, the following result holds:

\begin{prop}
\label{prop_normal}
Let $x$ be an input sample and $x'$ be drawn from $\mathbb{B}(x,r)$ uniformly at random. If the probability distribution of $h(x, x', m)$ follows a normal distribution $\mathcal{N}(\mu_{h},\sigma_{h}^2)$, the local robustness risk $\mathbf{E}_{x' \sim \mathbb{B}(x,r)} \mathbbm{1}[m(f_{\theta}, x, x')=\mathtt{false}]$ is equal to $\mathbb{P}(\mathcal{N}(0,1) > -\mu_{h} / \sigma_{h})$.
\end{prop}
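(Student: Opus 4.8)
The plan is to collapse the inner expectation in the definition of the local robustness risk into a single Gaussian tail probability, in three short steps. First I would observe that, since the expectation of an indicator is the probability of the underlying event, the local robustness risk satisfies
\begin{equation}
\mathbf{E}_{x' \sim \mathbb{B}(x,r)} \mathbbm{1}[m(f_{\theta}, x, x')=\mathtt{false}] = \mathbb{P}_{x' \sim \mathbb{B}(x,r)}\big[ m(f_{\theta}, x, x')=\mathtt{false} \big].
\end{equation}

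The second step — the only one that needs genuine care — is to show that, for $x'$ drawn uniformly from $\mathbb{B}(x,r)$, the event $\{m(f_{\theta}, x, x')=\mathtt{false}\}$ coincides, up to a set of probability zero, with $\{h(x, x', m) > 0\}$. I would verify this by a case analysis over the three metrics. In each case the reference index $i$ used in the definition of $h$ is exactly the index against which the metric compares the prediction: $i=c(x')$ for $m_0$, $i=c(x)$ for $m_1$, and $i=f_{\theta}(x)$ for $m_2$. Unfolding $f_{\theta}(x') = \texttt{argmax}_j\, y_j(x')$, one has $f_{\theta}(x') \neq i$ precisely when $\texttt{max}_{j\neq i}\, y_j(x') > y_i(x')$, i.e. when $h(x, x', m) = \texttt{max}_{j\neq i}\, y_j(x') - y_i(x') > 0$; and in all three cases $m$ evaluates to $\mathtt{false}$ exactly when $f_{\theta}(x') \neq i$. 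The borderline configuration $\texttt{max}_{j\neq i}\, y_j(x') = y_i(x')$, where the tie-breaking convention of $\texttt{argmax}$ is what decides the metric, is exactly $\{h(x, x', m) = 0\}$; under the hypothesis that $h(x, x', m)$ is (non-degenerate) normal this set has measure zero and may be discarded. I would also note at this point that the statement implicitly requires $\sigma_h > 0$, so that $-\mu_h/\sigma_h$ is well defined.

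Finally, using the assumption $h(x, x', m) \sim \mathcal{N}(\mu_h, \sigma_h^2)$, I would standardize: the variable $Z := \big(h(x, x', m) - \mu_h\big)/\sigma_h$ is standard normal, so
\begin{equation}
\mathbb{P}\big[h(x, x', m) > 0\big] = \mathbb{P}\!\left[ Z > -\mu_h/\sigma_h \right] = \mathbb{P}\big(\mathcal{N}(0,1) > -\mu_h/\sigma_h\big),
\end{equation}
which is the claimed identity once chained with the previous two steps. The main obstacle is conceptual rather than computational: it lies entirely in the second step — confirming that the single scalar margin $h$ faithfully encodes violation of the metric uniformly across $m_0$, $m_1$, $m_2$, and that the degenerate tie event is negligible. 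The remaining pieces are the indicator-to-probability rewriting and a one-line standardization of the Gaussian.
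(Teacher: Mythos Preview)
Your proposal is correct and follows exactly the same approach as the paper's proof: identify $\{m(f_\theta,x,x')=\mathtt{false}\}$ with $\{h(x,x',m)>0\}$ and then standardize the assumed Gaussian. The paper's version is simply the terse two-sentence form of your three steps, without the explicit case analysis over $m_0,m_1,m_2$ or the remark on ties.
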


\begin{proof}
The robustness metric $m$ is violated at $x'$ when the value of $h(x, x', m)$ is greater than 0. The probability of violation is thus $\mathbb{P}(\mathcal{N}(\mu_{h},\sigma_{h}^2) > 0) = \mathbb{P}(\mathcal{N}(0,1) > -\mu_{h} / \sigma_{h})$.
\end{proof}

We observe that if the scores are taken before the softmax layer, the distribution of $h(x, x', m)$ is close to a normal distribution when $r$ is small. We give an intuitive explanation as follows.
Let $x$ be an image that has $d$ pixels, $x' \sim \mathbb{B}(x,r)$ be the perturbed image, and $\Delta = x' - x$.
Both $x'$ and $\Delta$ can be decomposed with respect to their input dimensions: $x' = [x'_1 ,  \cdots , x'_d]$, $\Delta = [\Delta_1, \cdots, \Delta_d]$. 
Consider the difference of margins for these two images when $|\Delta|$ is close to zero: $h(x, x', m) - h(x, x, m) \approx
\Delta_1 \cdot \frac{\partial h}{\partial x'_1} \big|_{x'=x} + ... +
\Delta_d \cdot \frac{\partial h}{\partial x'_d} \big|_{x'=x}$. 
If the gradients of the loss function are regularized, we can expect that all the partial derivatives in the above formula are bounded when training is completed.
Using the weighted central limit theorem~\cite{weber2006weighted}, for a sufficiently large $d$, the distribution of $h(x, x', m) - h(x, x, m)$ approximates a normal distribution. Note that $h(x, x, m)$ is a constant for a specific $x$. Therefore, under reasonable assumptions, the distribution of $h(x, x', m)$ is approximately normal. We empirically find that this approximation becomes more accurate as $r$ decreases.

\subsubsection{Adaptive Multi-level Splitting.}

Although the above method requires significantly fewer simulations than the naive Monte Carlo, its accuracy is of concern when the local robustness risk is extremely small. Advanced Monte Carlo techniques, such as importance sampling and importance splitting, can construct unbiased estimates with reduced variance for extremely rare events and find counterexamples when they exist. Among those, adaptive multi-level splitting (AMLS)~\cite{cerou2019adaptive} can exploit the margin function to estimate the precise value of local robustness risk.

Specifically, when applied to our problem, AMLS partitions the margins with an ascending sequence of levels $L_1 \cdots L_k$. Every iteration of AMLS starts with $M$ perturbed samples all satisfying $h(\cdot) > L_{i-1}$. During an iteration, the algorithm first decides $L_{i}$ in an adaptive way, such that exactly $M_0$ samples satisfy $h(\cdot) > L_{i}$. It then split those $M_0$ samples through a Markov process to produce a total of $M$ new samples, all satisfying the new condition. The algorithm terminates when $L_{i} \geq 0$. A similar technique was applied to measure the local robustness risk for $m_2$ in~\cite{webb2018statistical}. The authors demonstrated the strengths of AMLS, including its scalability to large neural networks and its reliability in providing high-quality estimates.

However, it is intractable to apply AMLS on all $N$ samples in the global distribution due to the computational cost.
Note that a large $N$ is required for PAC verification (Expression~\ref{pac_spec}).

\subsection{The Global Robustness Certification Algorithm}
\label{sec:algorithm}

Section~\ref{sec:measure} discusses two methods, namely parameter estimation and AMLS, to estimate the local robustness risk. In this section, we present the AMLS Calibrated parameter Estimation (ACE) algorithm. ACE combines the benefits of both methods to obtain efficient yet accurate estimates of local robustness risks. 
Essentially, ACE adopts AMLS to calibrate the results from parameter estimation.
More explicitly, ACE assumes that there exists a strong relationship between the probability of $h(x, x', m) > 0$ and the local robustness risk at $x$. 
Hence, it randomly selects a subset of samples to learn this relation with respect to $f_{\theta}$ and $\mathcal{D}$.
ACE then utilizes the learned model to predict the local robustness risks for the remaining samples, and finally constructs the cumulative robustness function with all the predictions.

\begin{algorithm}[htb]
\caption{Global Robustness Certification Algorithm}\label{alg:cert}
\begin{flushleft}
        \textbf{Inputs:} DNN classifier $f_{\theta}$, probabilistic program generator $G$, maximal radius of adversarial perturbation $r$, number of samples $N$, number of perturbations per sample $M$, number of AMLS executions $N_0$, robustness metric $m$. \\
        \textbf{Output:} Cumulative robustness function $R(t)$.
\end{flushleft}
\begin{flushleft}
\begin{algorithmic}[1]
\State Sample $g_1, \cdots, g_{N}$ i.i.d. from $G$
\label{ppl}
\For{\texttt{i $=$ 1 to $N$}}
\label{stat_begin}
\For{\texttt{j $=$ 1 to $M$}}
\State Sample $g'_{i,j}$ uniformly at random from $\mathbb{B}(g_i,r)$
\State $z_{i,j} \leftarrow h(g_i, g'_{i,j}, m)$
\EndFor
\State Compute $\mu_{z,i}$ and $\sigma_{z,i}$, the mean and the standard deviation of $z_{i,1}, \cdots, z_{i,M}$
\EndFor
\label{stat_end}

\For{\texttt{i $=$ 1 to $N_0$}} \label{amls_begin}
\State Compute $p_i \leftarrow $ \textit{local\_AMLS}$(g_i,f_{\theta}, m)$
\EndFor
\label{amls_end}

\For{\texttt{i $=$ 1 to $N_0$}}\label{reg_begin}
\State Set $\widehat{\mu_{i}} \leftarrow log(p_i)$, $\widehat{\sigma_{i}} \leftarrow$ \textit{variance\_estimator}($g_i$)
\State $x_i \leftarrow log(\mathbb{P}(\mathcal{N}(0,1) > -\mu_{z,i} / \sigma_{z,i}))$\label{mu-sigma}
\State $y_i \leftarrow log(p_i)$
\EndFor

\State Run linear regression on $x$ and $y$, \textit{s.t.} $y_i = \beta_0 + \beta_1 x_i + \epsilon_i$
\label{reg_end}

\For{\texttt{i $=$ $N_0 + 1$ to $N$}}\label{pred_begin}
\State $x_i \leftarrow log(\mathbb{P}(\mathcal{N}(0,1) > -\mu_{z,i} / \sigma_{z,i}))$
\State $\widehat{\mu_{i}} \leftarrow \beta_0 + \beta_1 x_i$, $\widehat{\sigma_{i}} \leftarrow$ \textit{variance\_estimator}($g_i$)
\EndFor
\label{pred_end}
\State $R(t) \leftarrow \frac{1}{N} \cdot \sum_{i=1}^N \mathbb{P}(\mathcal{N}(\widehat{\mu_{i}},\widehat{\sigma_{i}}) \leq log(t))$\label{ret_begin}\\
\Return{$R(t)$}
\label{ret_end}

\end{algorithmic}
\end{flushleft}
\end{algorithm}

The global robustness certification algorithm is detailed in Algorithm~\ref{alg:cert}. It first draws $N$ random samples from the global distribution given by the probabilistic program generator (Line~\ref{ppl}). For each of these samples, it produces $M$ perturbed samples in the $r$-neighbourhood. Then the algorithm infers the margins through forwarding passes and computes the mean and the standard deviation (Line~\ref{stat_begin}-\ref{stat_end}). 
Afterwards, AMLS is launched for the first $N_0$ samples (Line~\ref{amls_begin}-\ref{amls_end}).

The algorithm builds a linear model between the distribution of the margins and the local robustness risks using those samples evaluated by AMLS (Line~\ref{reg_begin}-\ref{reg_end}). We choose the independent variable as the portion of the standard normal distribution that is greater than $-\mu_{z,i} / \sigma_{z,i}$ (Line~\ref{mu-sigma}), which are the statistics of the margins surrounding $g_i$. Note that $\mu_{z,i}$, the average margin, is typically negative. Intuitively, when $\mu_{z,i}$ increases, meaning that the score of the target class has a smaller advantage over those of the remaining classes, the chance of violating the local robustness metric increases. On the other hand, as $\sigma_{z,i}$ increases, the chance of violation increases when $\mu_{z,i} < 0$. We will empirically demonstrate the strength of the linear relation in our experiments.

Thereafter, the algorithm utilizes the linear model to generate predictions for the remaining samples that are not evaluated by AMLS (Line~\ref{pred_begin}-\ref{pred_end}). It outputs a cumulative robustness function, which is an integral of the local robustness risk predictions among all samples (Line~\ref{ret_begin}-\ref{ret_end}).

We use standard techniques to estimate the variance of the predictions~\cite{lee2018variance}. However, the choice of variance estimators has an almost negligible impact on the output for a sufficiently large $N$.
\section{Evaluation}

We implemented our global robustness certification algorithm with PyTorch 1.7 and CUDA 11.3. All experiments are conducted on a Linux desktop with a 4-core 3.2GHz processor, GTX 1070, and 16GB RAM. We use $l_{\infty}$-norm to bound perturbations throughout the experiments.

We choose the generative model from Bayesian Program Learning (BPL)~\cite{lake2015human,feinman2020learning}. This model generates human-like handwritten characters in 50 classes, the same as the Omniglot dataset. The generated images are resized into $28\times 28$ pixels and regularized to mimic the Omniglot dataset.

We choose DCN4, a variant of the decoder choice network~\cite{liu2021decoder}, as $f_{\theta}$. We obtained the network parameters from the original authors. The model has superior performance on the Omniglot dataset and is among the top on the global Omniglot challenge ranking in terms of classification accuracy~\cite{DCN_result}.

\subsection{General Performance}

\begin{table}[b]
    \centering
    \setlength{\tabcolsep}{0.5em} 
    \begin{tabular}{c c c  c  c c  c c  c c }
    \hline
        \multicolumn{3}{c}{Setting} & Average & \multicolumn{6}{c}{ \underline{\quad\quad Cumulative Robustness (\%) \quad\quad}} \\
        \cline{1-3}
        \multirow{2}{*}{Method} & \multirow{2}{*}{$N$} 
        & \multirow{2}{*}{$N_0$} & Runtime & 
        \multicolumn{2}{c}{$t=10^{-5}$} &
        \multicolumn{2}{c}{$t=10^{-10}$} &
        \multicolumn{2}{c}{$t=10^{-15}$} \\
        &&& (s) & Mean & SD & Mean & SD & Mean & SD \\
        \hline
        \hline

Naive MC & 25 & $-$ & 7357.0 & 96.7 & 4.5 & $-$ & $-$ & $-$ & $-$ \\
\hline
AMLS & $-$ & 70 & 7261.6 & 94.9 & 2.7 & 87.6 & 3.5 & 76.4 & 4.7 \\
\hline
 & 100 & 20 & 2172.9 & 95.2 & 2.4 & 86.7 & 4.4 & 74.0 & 6.7 \\
 & 100 & 40 & 4283.5 & 94.3 & 2.5 & 85.8 & 3.0 & 73.2 & 6.7 \\
\multirow{2}{*}{ACE} & 100 & 60 & 6511.7 & 95.3 & 2.1 & 88.7 & 3.2 & 76.1 & 5.0 \\
 & 200 & 60 & 6449.7 & 94.9 & 1.3 & 86.8 & 1.5 & 75.0 & 3.3 \\
 & 400 & 60 & 6569.7 & 94.9 & 0.9 & 85.8 & 1.6 & 73.5 & 2.9 \\
 & 800 & 60 & 6877.7 & 95.0 & 0.4 & 86.4 & 0.9 & 73.5 & 2.8 \\

\hline
    \end{tabular}
    \vspace{0.3em}
    \caption{Comparison of execution time and accuracy with baselines.}
    \label{tab:performance}
\end{table}

\subsubsection{Setup.} We evaluate the performance of the ACE algorithm by comparing it with two baseline settings: the naive Monte Carlo (naive MC) and the AMLS only (AMLS). All computations are deployed on the CPU, except that forward passes are deployed on the GPU. We set a soft time limit for each setting, \textit{i.e.}, we force terminate only after the computation of the current sample finishes once the time limit is reached. We let the naive Monte Carlo and the AMLS compute as many samples as possible before hitting the time limit. We choose $m=m_2$ for this experiment.

We set $M = 10^5$ for the naive Monte Carlo because its accuracy will further deteriorate when $M$ is greater, whereas we set $M=200$ for the ACE. 
We configure the \textit{local\_AMLS} function used by both the AMLS and the ACE settings as follows: sample quantile $= 0.1$, number of particles $= 200$, maximal number of levels $= 20$, number of Metropolis-Hastings updates after each level $= 10$, and the adaptive width proposal as described in~\cite{webb2018statistical}.

The results are summarized in Table~\ref{tab:performance}. We repeat the experiment for each setting $30$ times.

\subsubsection{Accuracy.} We evaluate the estimate accuracy at three different local robustness thresholds. 
Because all the methods are unbiased estimators of the robustness risk, the one with a smaller standard deviation (SD) should be more accurate.
It can be seen from Table~\ref{tab:performance} that the naive Monte Carlo has the worst performance. It cannot find even a single counterexample at $t=10^{-10}$ and $t=10^{-15}$ within the designated time period. Among the other settings, accuracy is generally improved when either $N$ or $N_0$ increases. As $N_0$ grows, the linear regression is more reliable, which in turn improves the quality of the local robustness risk predictions. As $N$ grows, more samples are drawn from $\mathcal{D}$, so an estimator can better capture the global distribution and thus improve its accuracy.

\subsubsection{Execution Time.} The naive Monte Carlo is the slowest of the three methods. Among different settings of AMLS and ACE, the execution time is dominated by the \textit{local\_AMLS} function calls, whose cost is in proportion to $N_0$. As a result, running
the ACE algorithm with a high accuracy only requires a reasonable number of samples and a machine time proportional to
the model’s forward propagation. Furthermore, the ACE algorithm can be easily parallelized.

\subsection{Parametric Estimation and Regression}

\begin{figure}[htb]
    \centering
    \begin{subfigure}[b]{0.47\textwidth}
         \centering
         \includegraphics[scale=0.37]{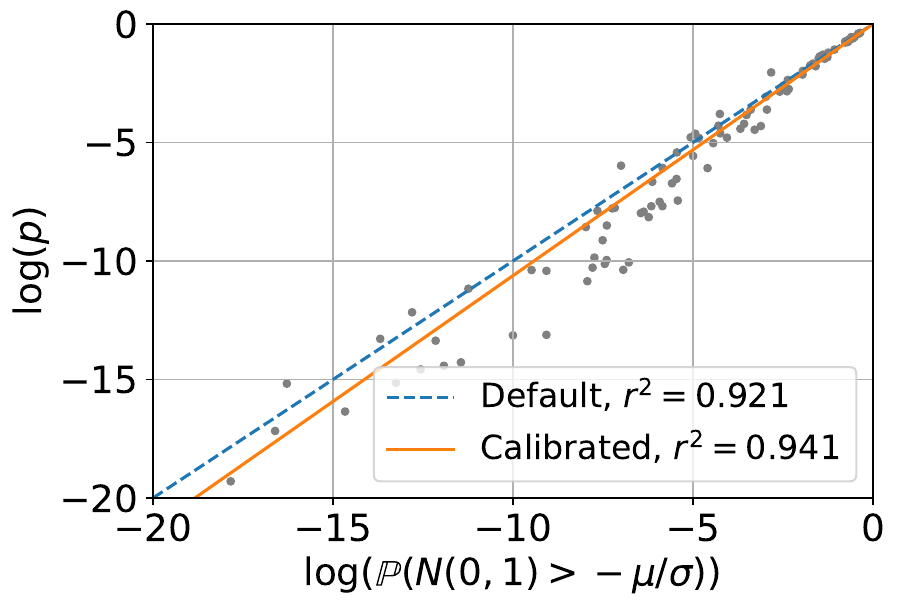}
         \caption{perturbation radius $r=0.05$}
     \end{subfigure}
     \begin{subfigure}[b]{0.47\textwidth}
         \centering
         \includegraphics[scale=0.37]{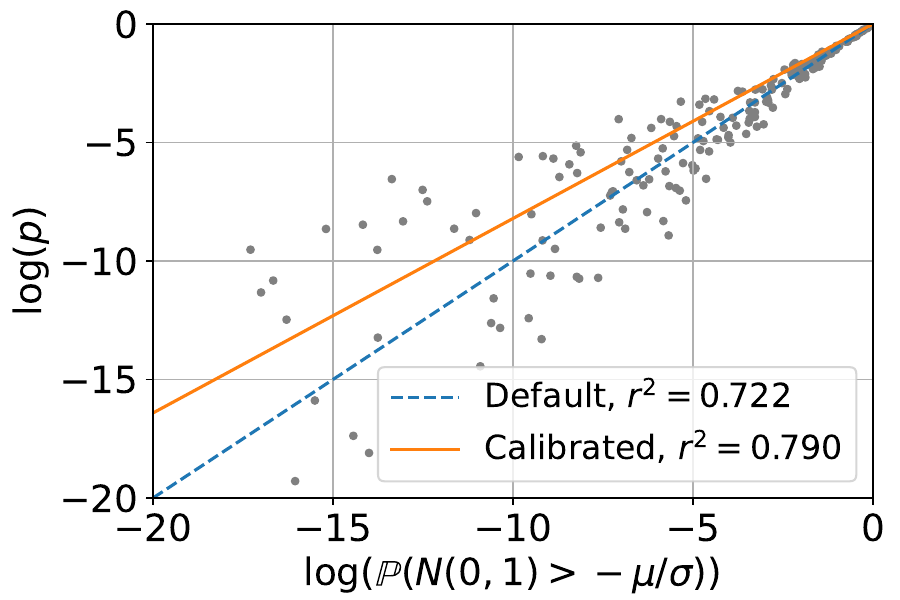}
         \caption{perturbation radius $r=0.1$}
     \end{subfigure}
    \caption{Assessment of parameter estimation with (solid line) and without (dashed line) regression. Local robustness risks (\textit{p}) of the points are computed by AMLS.}
    \label{fig:reg}
\end{figure}

This experiment assesses the performances of parameter estimation and calibration. Each point represents a random sample drawn from $\mathcal{D}$ from all classes. Computing the ground truth local robustness risks for these samples is prohibitively expensive, so we use the results from \textit{local\_AMLS} instead.

Figure~\ref{fig:reg}(a) and Figure~\ref{fig:reg}(b) show our experiment results for $r=0.05$ and $r=0.1$, respectively. Without any calibration, the parameter estimation predicts $p$ from $\mathbb{P}(\mathcal{N}(0,1) > -\mu_{h} / \sigma_{h})$ and achieves high $\mathrm{r}^2$ values. Moreover, after launching \textit{local\_AMLS} on $40$ samples and calibration, the $\mathrm{r}^2$ values on the remaining samples are further improved.

Note that the local robustness risks are displayed on a logarithmic scale, so the samples with small $p$ values seem to deviate from the prediction lines. The method is more reliable for smaller perturbations. This is partly because the outputs of a DNN tend to be continuous in a smaller region, so the distribution of the margins is closer to a normal distribution.

\subsection{Cumulative Robustness Function}

\begin{figure}[htb]
    \centering
    \begin{subfigure}[b]{0.47\textwidth}
         \centering
         \includegraphics[scale=0.37]{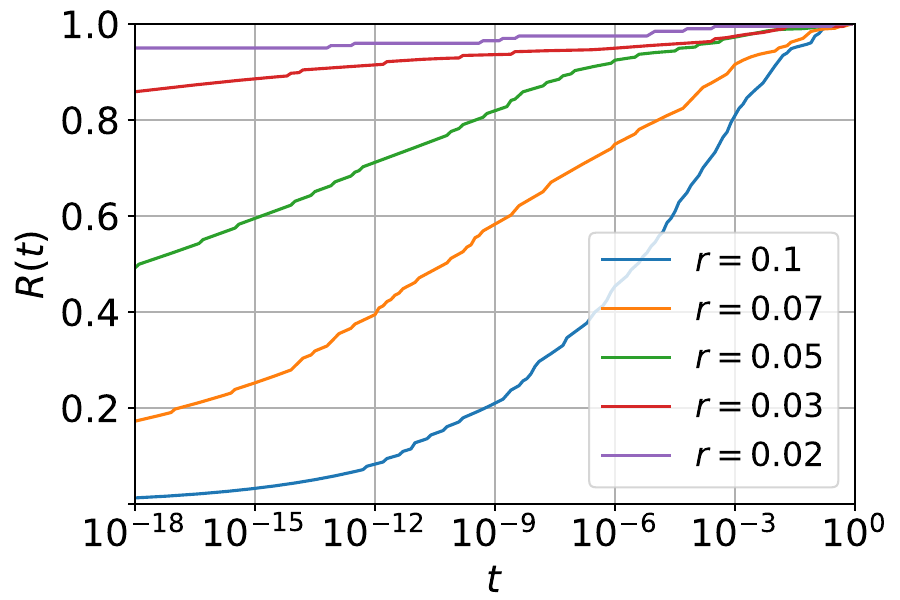}
         \caption{}
     \end{subfigure}
     \begin{subfigure}[b]{0.47\textwidth}
         \centering
         \includegraphics[scale=0.37]{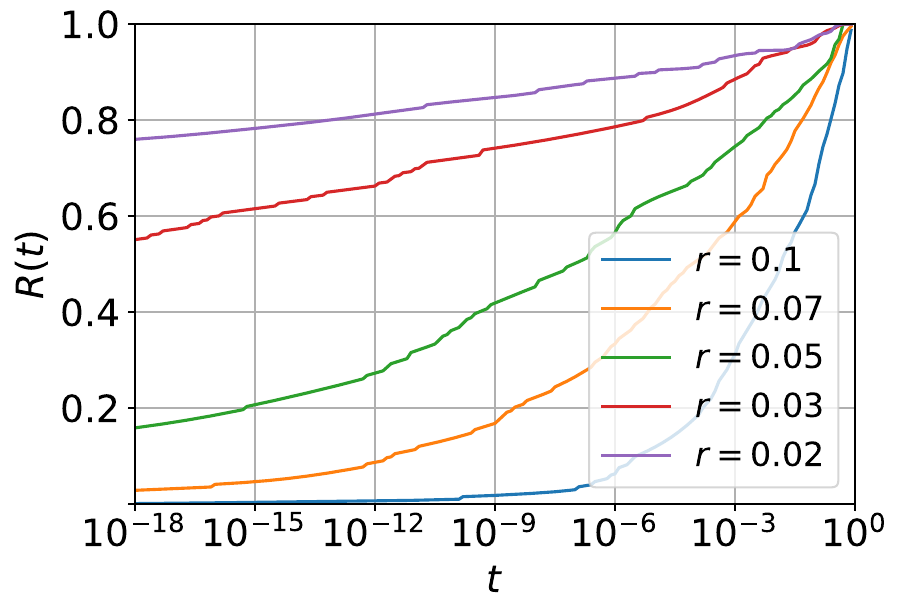}
         \caption{}
     \end{subfigure}
    \caption{The cumulative robustness functions for classes 1-5 (a) and 6-10 (b).}
    \label{fig:curves}
\end{figure}

Figure~\ref{fig:curves} (a) and (b) plot the cumulative robustness functions of different classes in the Omniglot dataset and different perturbation radii. Each curve represents an $R(t)$ function yielded from a single  execution of Algorithm~\ref{alg:cert}. All curves are monotonically increasing as the value of the local robustness threshold $t$ increases. A verifier can prescribe a $t$ value to check whether the intersection on the curve is above her expected value for global robustness.

\subsection{Mining Counterexamples}

\begin{figure}[htb]
    \centering
    \includegraphics[scale=0.45]{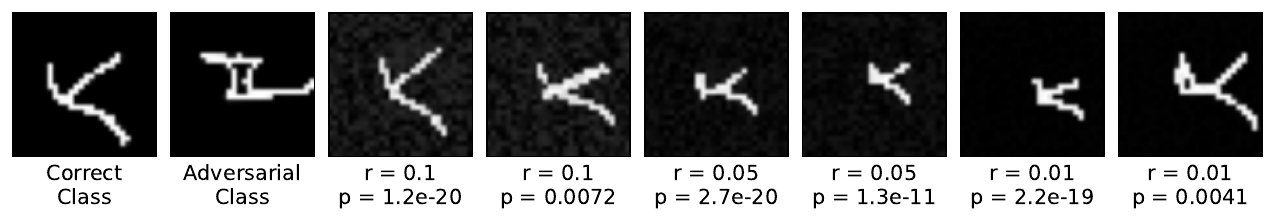}
    \includegraphics[scale=0.45]{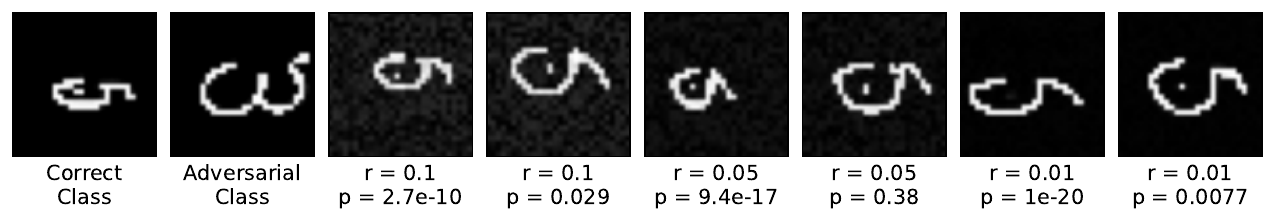}
    \includegraphics[scale=0.45]{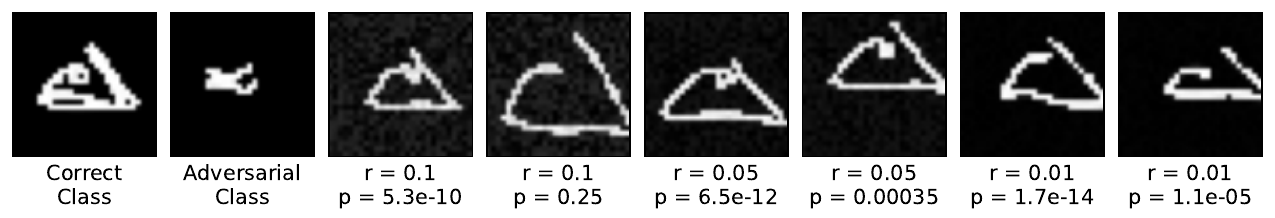}
    \caption{Counterexamples mis-classified to designated classes.}
    \label{fig:cex}
\end{figure}

Figure~\ref{fig:cex} showcases ACE's capability to find rich, diversified and human understandable counterexamples. Each pair of counterexamples with the same perturbation radius $r$ are picked from $10$ counterexamples, and they have the least and the greatest $p$ values in their neighborhoods. ACE can produce as many such counterexamples as one wishes for adversarial training.
\section{Related Work}

\subsection{Certification of Deep Neural Networks}

The DNN verification problem can be formulated in the following general form~\cite{goodman2014concepts}: given $\mathcal{X} \subset \mathbb{R}^{m}$ and $\mathcal{Y} \subset \mathbb{R}^{n}$, deciding whether $x\in \mathcal{X}\Rightarrow f_{\theta}(x)\in \mathcal{Y}$. To certify local robustness, $\mathcal{X}$ is the $l_p$-norm bounded neighborhood of an input example $x_0$, while $\mathcal{Y}$ corresponds to the label $f_{\theta}(x_0)$.
Various approaches are proposed to formally solve the verification problem. Those methods include layer-by-layer reachability analysis~\cite{gehr2018ai2,xiang2018output,weng2018towards,huang2017safety,boopathy2019cnn}, SAT or SMT reachability analysis~\cite{bunel2017unified,ehlers2017formal,katz2017reluplex}, mixed-integer linear programming~\cite{bastani2016measuring,tjeng2017evaluating}, dual optimization~\cite{wong2018provable,raghunathan2018certified}, and semi-definite optimization~\cite{fazlyab2020safety,raghunathan2018semidefinite}.
These approaches have several drawbacks. As already mentioned, the verification scope is limited to a single $l_p$-bounded neighborhood of an input example, while the bound is selected without basis, and this neighbourhood can overlap with the decision region of another class. Furthermore, these approaches cannot be extended to a global region directly. 
They employ abstraction-based techniques to deal with the size of the network and the non-linearity in the activation functions.
The over-approximation caused by abstraction will be significantly amplified if the input set is a global region instead of a small neighbourhood. In consequence, the verification accuracy will deteriorate if the input set is the decision region of a whole class.

\subsection{Probabilistic Methods for Deep Neural Network Verification}

While deterministic certification approaches aim to analyze the worst case of a DNN when the input is within a small region, probabilistic approaches relax the worst case requirement and can thus scale to large networks.
The \textit{margin} is defined as $y_i(x)-y_j(x)$, where $i$ is the ground truth label and $j$ is the label of the targeted class. The margin is typically required to be greater than a given positive value so that the DNN cannot be attacked by an adversary.
Probabilistic upper and lower bounds for the event can be computed layer-by-layer analytically when the input is constrained within an $l_p$-ball and follows a known distribution~\cite{weng2019proven}. Alternatively, the event can be approximated by a set of linear functions. The coefficients of those functions can be learned by drawing random input examples and inferring the corresponding values of the margin function. Scenario optimization guarantees that the learned linear model approximates the original DNN in terms of the event with high probability. In addition to certifying the correctness of the DNN, the learned model can be utilized to compute maximal input perturbation~\cite{anderson2020certifying} or to mine counterexamples~\cite{li2021probabilistic}.  Although this approach is agnostic to input distribution, the input should be bounded within an interval.

Probabilistic robustness can also be formulated as the probability that the \textit{Lipschitz continuity property} holds. The property can be estimated by randomly drawing pairs of samples~\cite{mangal2019robustness} or modelling the whole DNN as a probabilistic program consisting of  conditional affine transformations and then executing program verification~\cite{mangal2020probabilistic}.
Noticeably, ~\cite{leino2021globally} proposes \textit{global Lipschitz bounds}, which differs from our definition of global robustness. For example, while the Lipschitz property ensures a bounded change of class scores for some given perturbation, it does not ensure an unchanged classification result, especially when the input is close enough to the model's decision boundary.

Some probabilistic methods can verify general correctness properties, including robustness properties, for DNNs. A Binary Neural Network can be encoded into conjunctive normal form (CNF). The probability that a correctness property in CNF holds on the network can be estimated by a model counter~\cite{baluta2019quantitative}. 
Advanced statistical algorithms including Multi-level splitting~\cite{webb2018statistical} and adaptive hypothesis testing~\cite{baluta2019quantitative} can find counterexamples and estimate the probability of violation even if violations of the target property are extremely rare.
However, none of the methods above address the problem of generalizing the verification to the whole decision regions of classes.

\section{Conclusion}

We propose a comprehensive global robustness certification method called ACE. Based on a close investigation of robustness metrics in global distributions, we use probabilistic program generators and a sequence of estimation and regression techniques to enhance the method. ACE is capable of efficiently and accurately estimating global robustness as a function of perturbation radius and local robustness threshold. A verifier can query the function to check if a DNN meets the user-specified robustness requirements. Additionally, it can produce a large amount of high-quality data for adversarial training.

\clearpage

\bibliographystyle{splncs}
\bibliography{youl}
\end{document}